\newcommand{\R}{\mathbb{R}}
\newcommand{\s}{\mathcal{S}}
\newcommand{\A}{\mathcal{A}}
\newcommand{\E}{\mathop{\mathbb{E}}}
\newcommand{\JA}[1]{{\color{red}{#1}}}
\theoremstyle{plain}
\newtheorem{theorem}{Theorem}
\newtheorem{proposition}{Proposition}
\newtheorem{lemma}{Lemma}
\providecommand{\customgenericname}{}
\newcommand{\newcustomtheorem}[2]{%
  \newenvironment{#1}[1]
  {%
  \renewcommand\customgenericname{#2}%
  \renewcommand\theinnercustomgeneric{##1}%
  \innercustomgeneric
  }
  {\endinnercustomgeneric}
}
\definecolor{mycolor}{RGB}{19,106,212}
\theoremstyle{definition}
\newtheorem{assumption}{Assumption}
\theoremstyle{remark}
\newcommand{\st}{\mathbf{s}_t}
\newcommand{\so}{\mathbf{s}}
\newcommand{\at}{\mathbf{a}_t}
\newcommand{\ao}{\mathbf{a}}
\title{Average-Reward Soft Actor-Critic}
\author{Jacob Adamczyk\textsuperscript{1,2,$\dagger$},\ Volodymyr Makarenko\textsuperscript{3,$\dagger$}, \\Stas Tiomkin\textsuperscript{4},\ Rahul V. Kulkarni\textsuperscript{1,2}}
\keywords{average-reward, MaxEnt, entropy-regularization, actor-critic, deep RL.} 
\def\abstracttext{
The average-reward formulation of reinforcement learning (RL) has drawn increased interest in recent years for its ability to solve temporally-extended problems without relying on discounting. 
Meanwhile, in the discounted setting, algorithms with entropy regularization have been developed, leading to improvements over deterministic methods. 
Despite the distinct benefits of these approaches, deep RL algorithms for the entropy-regularized average-reward objective have not been developed. While policy-gradient based approaches have recently been presented for the average-reward literature, the corresponding actor-critic framework remains less explored.
In this paper, we introduce an average-reward soft actor-critic algorithm to address these gaps in the field.
We validate our method by comparing with existing average-reward algorithms on standard RL benchmarks, achieving superior performance for the average-reward criterion.
}
\begin{document}

\makeCover  
\maketitle  

\begin{abstract}
\abstracttext
\end{abstract}

\section{Introduction}
A successful reinforcement learning (RL) agent learns from interacting with its surroundings to achieve desired behaviors, as encoded in a reward function. However, in ``continuing'' tasks, where the amount of interactions is potentially unlimited, the total sum of rewards received by the agent is unbounded. To avoid this divergence, a popular technique is to \textit{discount} future rewards relative to current rewards. The framework of discounted RL enjoys convergence properties \citep{sutton&barto, kakade2003sample, bertsekas2012dynamic}, practical benefits \citep{schulman2015high, andrychowicz2020matters}, and a plethora of useful algorithms \citep{nature-dqn, trpo,ppo, rainbow, SAC1} making the discounted objective an obvious choice for the RL practitioner. Despite these benefits, the use of discounting introduces a (typically unphysical) hyperparameter $\gamma$ which must be tuned for optimal performance. The difficulty in properly tuning the discount factor $\gamma$ is illustrated in our motivating example, Figure~\ref{fig:swimmer}. Furthermore, agents solving the discounted RL problem will fail to optimize for long-term behaviors that operate on timescales longer than those dictated by the discount factor, $(1-\gamma)^{-1}$. Moreover, recent work has argued that the discounted objective is not even a well-defined optimization problem \citep{naik2019discounted}. Importantly, despite most state-of-the-art algorithms operating within this discounted framework, their metric for performance is most often the total or average reward over trajectories, as opposed to the discounted sum, which they are designed to optimize. In such cases, the discounted objective is used as a crutch for optimizing the true object of interest: long-term average performance.

To address these issues, another objective for solving continuing tasks has been defined and studied \citep{schwartz1993reinforcement, mahadevan1996average}: the average-reward objective. Although it is arguably a more natural choice, it has less obvious convergence properties since the associated Bellman operators no longer possess the contraction property. Despite an ongoing line of work on the theoretical properties of the average-reward objective~\citep{zhang2021finite, wan2023learning}, there remain a limited number of deep RL algorithms for this setting. Current algorithms beyond the tabular or linear settings focus on policy-gradient methods to develop deep actor-based models~\citep{ARTRPO, APO, ARO-DDPG}. While these advancements represent a positive step toward solving the average-reward objective, there remains a need for alternative approaches for the problem of average-reward deep RL.
\begin{wrapfigure}{r}{0.42\textwidth}
  \begin{center}
    \includegraphics[width=0.42\textwidth]{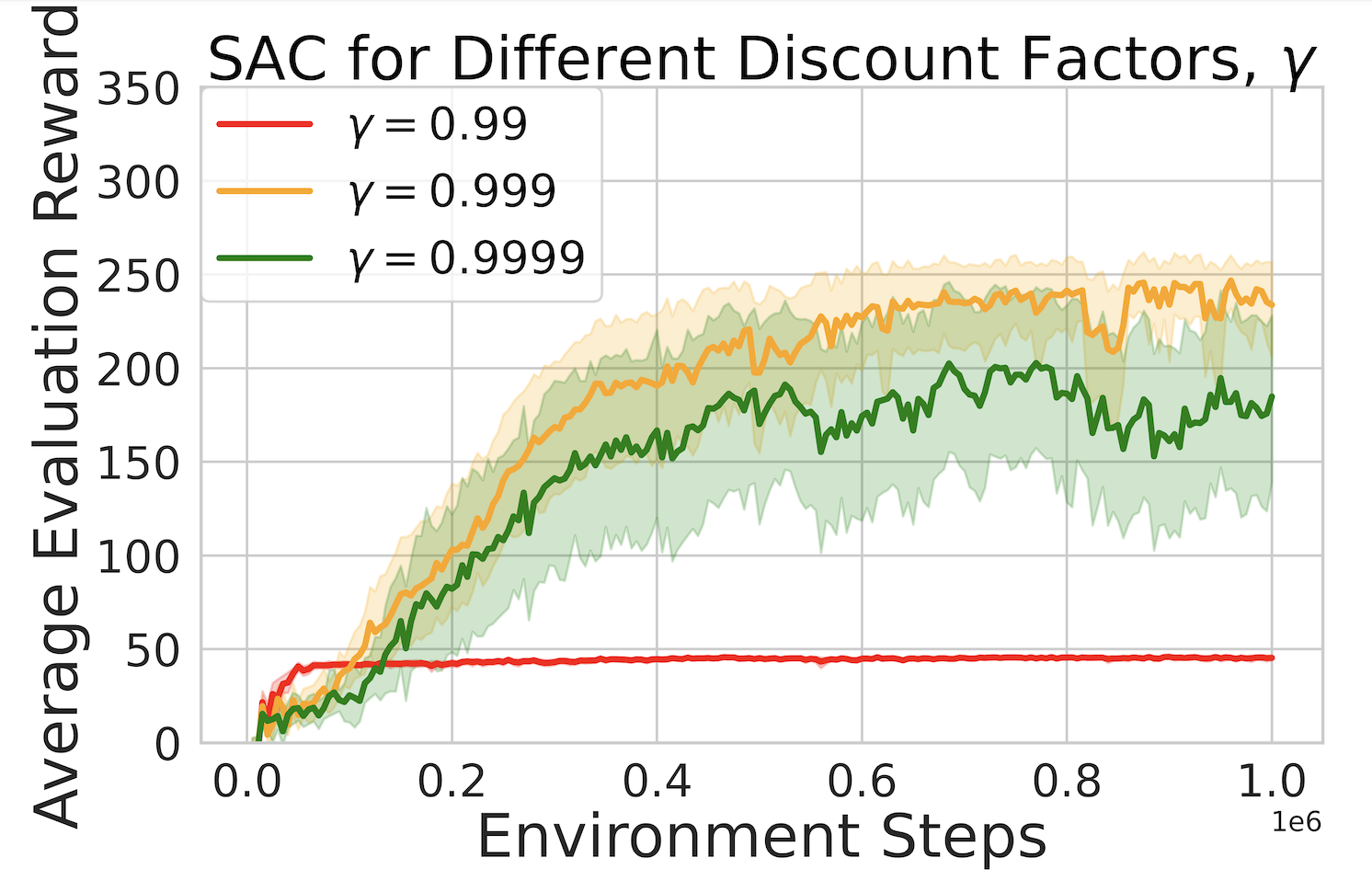}
  \end{center}
  \caption{The Swimmer-v5 environment, often not included in Mujoco benchmarks~\citep{swimmer-paper}, is notoriously difficult for discounted methods to solve when the discount factor is not tuned over and set to its default value of $\gamma=0.99$. Other discount-sensitive examples of environments have been discussed by~\cite{tessler2020reward}. We find that after carefully tuning the discount factor, SAC can solve the task, but the solution is quite sensitive to the choice of $\gamma$. Each curve corresponds to an average over 30 random seeds, with the standard error indicated by the shaded region.}
  \label{fig:swimmer}
\end{wrapfigure}
In both the discounted and average-reward scenarios, optimal policies are known to be deterministic \citep{mahadevan1996average, sutton&barto}. However, under various real-world circumstances (e.g. errors in the model, perception, and control loops), a deterministic policy can fail. In deployment, when RL agents face the sim-to-real gap, are transferred to other environments, or when perturbations arise \citep{HaarnojaSQL,haarnoja2018composable, eysenbach2022maximum}, fully-trained deterministic agents may be rendered useless. To address these important use-cases, it would be useful to have a stochastic optimal policy which is flexible and robust under uncertainty. Rather than using heuristics (e.g. $\varepsilon$-greedy, mixture of experts, Boltzmann) to generate a stochastic policy \textit{post-hoc}, the original RL problem can be regularized with an entropy-based term that yields an optimal policy which is naturally stochastic. Implementing this entropy-regularized RL objective corresponds to additionally rewarding the agent (in proportion to a temperature parameter, $\beta^{-1}$) for using a policy which has a lower relative entropy~\citep{levine2018reinforcement}, in the sense of Kullback-Leibler divergence. This formulation of entropy-regularized (often considered in the special case of maximum entropy or ``MaxEnt''\footnote{MaxEnt refers to using a uniform prior policy. In that case, ``low relative entropy'' (with respect to a uniform prior) is equivalent to ``high Shannon entropy''. In this work, we consider the case of more general priors.}) RL has led to significant developments in state-of-the-art off-policy algorithms \citep{HaarnojaSQL, SAC1, SAC2}. 

Despite the desirable features of both the average-reward and entropy-regularized objectives, an empirical study of the combination of these two formulations is limited, and no function-approximator algorithms exist yet for this setting. 
To address this, we propose a novel algorithm for average-reward RL with entropy regularization which is an extension of the discounted algorithm Soft Actor-Critic (SAC)~\citep{SAC1, SAC2}.
\nocite{blackwell1962discrete,mahadevan1996average}

Notably, our implementation requires minimal changes to common codebases, making it accessible for researchers and allowing for future extensions by the community.

\section{Preliminaries\label{sec:preliminaries}}
In this section, we discuss the background material necessary for the subsequent discussion.
Let $\Delta(\mathcal{X})$ denote the probability simplex over the space $\mathcal{X}$. A Markov Decision Process (MDP) is modeled by a state space $\s$, action space $\A$, reward function $r: \s \times \A \to \R$, transition dynamics $p: \s \times \A \to \Delta(\s)$ and initial state distribution $\mu \in \Delta(\s)$. The state space describes the set of possible configurations in which the agent (and environment) may exist. (This can be juxtaposed with the ``observation'' which encodes only the state information accessible to the agent. We will consider fully observable MDPs where state and observation are synonymous.) The action space is the set of controls available to the agent. Enacting control, the agent may alter its state. This change is dictated by the (generally stochastic) transition dynamics, $p$. 
At each discrete timestep, an action is taken and the agent receives a reward $r(s,a) \in \R$ from the environment. 

We will make some of the usual assumptions for average-reward MDPs~\citep{wan2021learning}:
\begin{assumption}
The Markov chain induced by any stationary policy $\pi$ is communicating. \label{assump:mc}
\end{assumption}

\begin{assumption}
The reward function is bounded.
\end{assumption}

In solving an average-reward MDP, one seeks a control policy $\pi$ which maximizes the expected \textit{reward-rate}, denoted $\rho^\pi$. In the average-reward framework, such an objective reads:
\begin{equation}
    \rho^\pi=\lim_{N \to \infty} \frac{1}{N}  \E_{\tau \sim{} p,\pi,\mu} \left[\sum_{t=0}^{N-1} r(\st, \at) \right],
    \label{eq:reward-rate-defn}
\end{equation}
where the expectation is taken over trajectories generated by the dynamics $p$, control policy $\pi$, and initial state distribution $\mu$.

The remaining non-scalar (that is, state-action-dependent) contribution to the value of a policy is called the average-reward differential bias function. Because of its analogy to the $Q$-function in discounted RL, we follow recent work \citep{ARTRPO} and similarly denote it as:
\begin{equation}
    Q^\pi_\rho(\so,\ao)=\E_{\tau \sim{} p,\pi} \left[ \sum_{t=0}^\infty  r(\st,\at) - \rho^\pi \Biggr| \so_0=\so,\ao_0=\ao \right].
    \label{eq:differential-value}
\end{equation}

We will now introduce a variation of this MDP framework which includes an entropy regularization term. For notational convenience we refer to entropy-regularized average-reward MDPs as ERAR MDPs. The ERAR MDP constitutes the same ingredients as an average-reward MDP stated above, in addition to a pre-specified prior policy\footnote{For convenience we assume that $\pi_0$ has support across $\A$, ensuring the Kullback-Leibler divergence is always finite.} ${\pi_0: \s \to \Delta(\A)}$ and ``inverse temperature'', $\beta$. The modified objective function for an ERAR MDP now includes a regularization term based on the relative entropy (Kullback-Leibler divergence), so that the agent now aims to optimize the expected \textit{entropy-regularized reward-rate}, denoted  $\theta^\pi$:
\begin{equation}
    \theta^\pi \doteq \lim_{N \to \infty} \frac{1}{N} \E_{\tau \sim{} p,\pi, 
    \mu} \left[ \sum_{t=0}^{N-1}  r(\st,\at) - \frac{1}{\beta} \log \frac{\pi(\at|\st)}{\pi_0(\at|\st)}  \right], \label{eq:theta-defn}
\end{equation}
\begin{equation}
    \pi^*(a|s) = \mathop{\mathrm{argmax}}_{\pi} \theta^\pi.
\end{equation}
Assumption~\ref{assump:mc} implies the expression in Equation~\eqref{eq:theta-defn} is independent of the initial state-action and ensures the reward-rate is indeed a unique scalar. From hereon, we will simply write $\theta=\theta^{\pi^*}$ for the optimal entropy-regularized reward-rate for brevity. Comparing to Equation~\eqref{eq:reward-rate-defn}, this rate is seen to include an additional entropic contribution, the relative entropy between the control ($\pi$) and prior ($\pi_0$) policies.

Beyond a mathematical generalization from the MaxEnt formulation, the KL divergence term has also found use in behavior-regularized RL tasks, especially in the offline setting \citep{wu2019behavior, zhang2024implicit} and has found growing interest in its application to large language models (LLMs)~\citep{rafailov2024direct, yan2024efficient}. Using a non-uniform prior has also been exploited to develop approaches for solving the un-regularized problem~\cite{adamczyk2025eval} and the problem of potential-based reward shaping and compositionality~\cite{adamczyk2023utilizing}. Intuitively, the choice of prior allows one to exploit inductive biases while maintaining robustness. 

The corresponding differential entropy-regularized action-value function is then given by:
\begin{equation}
    Q^\pi_\theta(\so,\ao) = r(\so,\ao) - \theta^\pi + \E_{\tau \sim{} p,\pi} \Biggl[ \sum_{t=1}^\infty \Biggl( r(\st,\at) 
     - \frac{1}{\beta} \log \frac{\pi(\at|\st)}{\pi_0(\at|\st)} - \theta^\pi \Biggr) \Biggr| \so_0=\so,\ao_0=\ao  \Biggr].
\label{eq:theta-differential-value}
\end{equation}

We have used the subscripts of $\theta$ and $\rho$ in this section to distinguish the two value functions. In the following, we drop the $\theta$ subscript as we focus solely on the entropy-regularized objective. Similar to the notation for the average-reward rate, we make the notation compact, and write ${Q(\so,\ao)=Q^{\pi^*}_\theta(\so,\ao)}$ as a shorthand.

\section{Prior Work}\label{sec:prior-work}

Research on average-reward MDPs has a longstanding history, dating back to seminal contributions by \cite{blackwell1962discrete} and later \cite{mahadevan1996average}, which laid the groundwork for future algorithmic and theoretical investigations \citep{even2009online, politex,abounadi,neu2017unified,wan2021learning}. Due to their theoretical nature, these studies primarily focused on algorithms within tabular settings or under linear function approximation, possibly explaining the limited work on the average-reward problem in the deep RL community. 
However, recent work has begun to address this challenge by tackling deep average-reward RL \citep{ARTRPO, APO, ARO-DDPG} with methods based on the policy gradient algorithm~\citep{sutton1999policy}.
Especially when tested on long-term optimization tasks, these studies have demonstrated superior performance of average-reward algorithms in the continuous control Mujoco benchmark~\citep{todorov2012mujoco}, compared to their discounted counterparts.

In the deep average-reward RL literature, research has primarily focused on extending known algorithms from the discounted to the average-reward setting. For example, \cite{ARTRPO}~first provided an extension of the on-policy trust region method TRPO~\citep{trpo} to the average-reward domain. To extend the classical discounted policy improvement theorem to this domain, they introduced a novel (double-sided) policy improvement bound based on K\'emeny's constant (related to the Markov chain's mixing time). Experimentally, they illustrated the success of ATRPO against TRPO, especially for long-horizon tasks in the Mujoco suite. Shortly thereafter,~\citep{APO} introduced an analogue of PPO~\citep{ppo} for average-reward tasks with an extension of generalized advantage estimation (GAE) and addressing the problem of ``value drift'', again proving successful in experimental comparisons with PPO. Most recently,~\cite{ARO-DDPG} continued this line of work by extending DDPG~\citep{ddpg} to the average-reward domain with extensive supporting theory, including finite-time convergence analysis. The authors also demonstrate the improved performance of their algorithm, ARO-DDPG, against the previously discussed methods, thereby demonstrating a new state-of-the-art algorithm for the average-reward objective.

In parallel, the discounted objective has included an entropy-regularization term, discussed in works such as \citep{todorov, todorov-pnas, ziebart-thesis, rawlik-thesis, HaarnojaSQL, geist2019theory} which to our knowledge has not yet been introduced in a deep average-reward algorithm. The included ``entropy bonus'' term in these methods has found considerable use in the development of both theory and algorithms in distinct branches of RL research~\citep{haarnoja2018composable, eysenbach2022maximum, park2023controllability}. This innovation yields optimal policies naturally exhibiting stochasticity in continuous action spaces, which has led SAC \citep{SAC2} and its variants to become state-of-the-art solution methods for addressing the discounted objective.

However, there is limited work on the combination of average-reward and entropy-regularized methods, especially for deep RL. Recent work by \cite{rawlik-thesis,neu2017unified, rose2021reinforcement,li2022stochastic,PRR,wu2024inverse} set the groundwork for combining the entropy-regularized and average-reward formulations by providing supporting theory and validating experiments. We will leverage their results to address the problem of deep average-reward RL with entropy regularization, while introducing some new theoretical results. 
In the next section, we present our average-reward extension of soft actor-critic.

\section{Proposed Algorithm \label{sec:asac}}
 
We begin with a brief discussion of soft actor-critic (SAC), for which we derive new theoretical results and provide an algorithm in the average-reward setting.
SAC~\citep{SAC1} relies on iteratively calculating a value (critic) of a policy (actor) and improving the actor through soft policy improvement (PI). In the discounted problem formulation, soft PI states that a new policy (denoted $\pi'$) can be derived from the value function of a previous policy ($\pi$) with $\pi' \propto \exp{\beta Q^{\pi}(\so,\ao)}$, which is guaranteed to outperform the previous policy in the sense of (soft) $Q$-values: ${Q^{\pi'}(\so,\ao) > Q^{\pi}(\so,\ao)}$ for all $\so,\ao$ (cf. Lemma 2 of \citep{SAC1} for details). We will first show that an analogous result for policy improvement holds in the ERAR setting. Note that in the case of large state-action spaces, experimentally verifying such inequalities becomes intractable~\citep{naik2024reinforcement} and can be alleviated by instead comparing reward rates: scalar quantities which can (in principle) be efficiently evaluated with rollouts. 

Since the value of a policy is now encoded in the entropy-regularized average reward rate $\theta^\pi$ and \textit{not} in the differential value, the analogue to policy improvement ($Q^{\pi'} > Q^{\pi}$) is to establish the bound $\theta^{\pi'} > \theta^{\pi}$ for some construction of $\pi'$ from $\pi$. Indeed, as we show, the same Boltzmann form over the differential value leads to soft PI in the ERAR objective. We later give some intuition on how this result can be understood as the limit $\gamma \to 1$ of SAC. After establishing PI and the related theory in this setting we will present our algorithm, denoted ``ASAC'' (for average-reward SAC, and following the naming convention of APO~\citep{APO} and ATRPO~\citep{ARTRPO}).

\subsection{Theory}
As in the discounted case, it can be shown that the $Q$ function for a fixed policy $\pi$ satisfies a recursive Bellman backup equation\footnote{Equation~\eqref{eq:V-defn} is an extension of $V^\pi_{\text{soft}}$ in \citep{HaarnojaSQL} to the case of non-uniform prior policy.}. This proposition was also derived in the concurrent work of~\cite{wu2024inverse} which analyzed the ERAR problem in the inverse RL framework:

\begin{proposition}
Let an ERAR MDP with reward function $r(\so,\ao)$, policy $\pi$ and prior policy $\pi_0$ be given. Then the differential value of $\pi$, denoted $Q^\pi(\st,\at)$, satisfies 
    \begin{equation}
        Q^\pi(\st,\at) = r(\st,\at) - \theta^{\pi} + \E_{\so_{t+1} \sim{p}} V^\pi(\so_{t+1}),
        \label{eq:q-pe}
    \end{equation}
with the entropy-regularized definition of state-value function
\begin{equation}
    V^\pi(\st) = \E_{\at \sim{} \pi} \left[ Q^{\pi}(\st,\at) - \frac{1}{\beta} \log \frac{\pi(\at|\st)}{\pi_0(\at|\st)} \right]. \label{eq:V-defn}
\end{equation}
\label{lem:erar-backup}
\end{proposition}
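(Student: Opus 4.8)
The plan is to derive Equation~\eqref{eq:q-pe} directly from the definition of the differential value function in Equation~\eqref{eq:theta-differential-value} by unrolling the infinite sum by a single timestep. The core idea is that the tail of the series (the terms from $t=2$ onward), after re-indexing via the Markov property, reconstitutes the differential value function evaluated at the successor state-action pair, while the leading ($t=1$) term supplies exactly the reward, entropy penalty, and rate contributions needed to assemble $V^\pi$ at the next state.

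Concretely, first I would abbreviate the per-step centered integrand as $\tilde r_t = r(\st,\at) - \tfrac{1}{\beta}\log\tfrac{\pi(\at|\st)}{\pi_0(\at|\st)} - \theta^\pi$ and rewrite Equation~\eqref{eq:theta-differential-value} as $Q^\pi(\so,\ao) = r(\so,\ao) - \theta^\pi + \E[\sum_{t=1}^\infty \tilde r_t \mid \so_0=\so, \ao_0 = \ao]$. Next I would condition the tail expectation first on $\so_1 \sim p(\cdot\mid\so,\ao)$ and then on $\ao_1 \sim \pi(\cdot\mid\so_1)$, splitting off the $t=1$ summand $\tilde r_1$ from the remainder. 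By the Markov property and stationarity of $\pi$, the shifted tail $\E[\sum_{t=2}^\infty \tilde r_t \mid \so_1,\ao_1]$ equals $\E[\sum_{t=1}^\infty \tilde r_t \mid \so_0 = \so_1, \ao_0 = \ao_1]$, which by the definition of $Q^\pi$ is precisely $Q^\pi(\so_1,\ao_1) - r(\so_1,\ao_1) + \theta^\pi$. Adding back $\tilde r_1$ cancels the $r(\so_1,\ao_1)$ and the $\theta^\pi$ terms, leaving exactly $Q^\pi(\so_1,\ao_1) - \tfrac{1}{\beta}\log\tfrac{\pi(\ao_1|\so_1)}{\pi_0(\ao_1|\so_1)}$ inside the expectation over $\ao_1 \sim \pi$. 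Recognizing this as the definition of $V^\pi(\so_1)$ in Equation~\eqref{eq:V-defn} and taking the outer expectation over $\so_1 \sim p$ yields Equation~\eqref{eq:q-pe}.

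The main obstacle is justifying the manipulations of the infinite series — splitting off the first term, re-indexing the tail, and interchanging the sum with the conditional expectations — all of which require the series $\sum_t \tilde r_t$ to converge in expectation and the differential value to be well-defined and finite. This is exactly where Assumption~\ref{assump:mc} is essential: the communicating property guarantees a unique stationary distribution and a reward-rate $\theta^\pi$ independent of the initial state-action, so that the centered summands $\tilde r_t$ have vanishing long-run contribution and the differential (bias) function exists. Combined with boundedness of the reward and finiteness of the entropy penalty (guaranteed since $\pi_0$ has full support across $\A$), the series converges under the mixing induced by the communicating chain, rendering the termwise splitting and the interchange of sum and expectation legitimate and making each step above rigorous.
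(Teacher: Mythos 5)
Your proof is correct and follows essentially the same route as the paper's: unrolling the definition of $Q^\pi$ in Equation~\eqref{eq:theta-differential-value} by one timestep, identifying the re-indexed tail (via the Markov property) with $Q^\pi(\so_{t+1},\ao_{t+1})$ so that the $r(\so_{t+1},\ao_{t+1})$ and $\theta^\pi$ terms cancel, and recognizing the remaining expectation over $\ao_{t+1}\sim\pi$ as $V^\pi(\so_{t+1})$. Your closing paragraph on why the series manipulations are legitimate (convergence of the centered sum under Assumption~\ref{assump:mc}, bounded rewards, and full support of $\pi_0$) is a point the paper's proof leaves implicit, and is a welcome addition rather than a deviation.
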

For completeness, we give a proof of this result (and all others) in the Appendix. As in the discounted case, the proof exploits the recursive structure of Eq.~\eqref{eq:theta-differential-value}.

As mentioned above, in the average reward formulation, the metric of interest is the reward-rate. Our policy improvement result thus focuses on increases in $\theta^{\pi}$, generalizing the recent work of \cite{ARTRPO} to the entropy-regularized setting. We find that the gap between any two entropy-regularized reward-rates can be expressed in the following manner:
\begin{tcolorbox}[colback=mycolor!5!white,colframe=mycolor!75!black]
\begin{lemma}[ERAR Rate Gap]
Consider two policies $\pi, \pi'$ absolutely continuous w.r.t. $\pi_0$. Then the gap between their corresponding entropy-regularized reward-rates is:
\begin{equation}
    \theta^{\pi'} - \theta^{\pi} = \E_{\substack{\st\sim d_{\pi'} \\ \at\sim \pi'}} \left( A^\pi(\st,\at) - \frac{1}{\beta} \log \frac{\pi'(\at|\st)}{\pi_0(\at|\st)}\right),
    \label{eq:theta-diff}
\end{equation}
where $A^\pi(\st,\at)=Q^\pi(\st,\at) - V^\pi(\st)$ is the advantage function of policy $\pi$ and $d_{\pi'}$ is the steady-state distribution induced by $\pi'$.
\label{lem:theta-relation}
\end{lemma}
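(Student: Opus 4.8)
The plan is to establish an entropy-regularized, average-reward analogue of the performance-difference lemma, following the same three-move structure used in the unregularized setting by~\cite{ARTRPO}. The two tools I would lean on are the Bellman backup of Proposition~\ref{lem:erar-backup} and the invariance of the steady-state distribution $d_{\pi'}$ under the chain induced by $\pi'$. The strategy is to rewrite the advantage through the Bellman equation, use stationarity to telescope away all the value-function terms, and finally identify the residual bare-reward expectation with $\theta^{\pi'}$ using the definition of the regularized rate.

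First I would expand the advantage via its definition $A^\pi(\st,\at) = Q^\pi(\st,\at) - V^\pi(\st)$ and substitute the backup~\eqref{eq:q-pe} for $Q^\pi$, obtaining
\begin{equation}
A^\pi(\st,\at) = r(\st,\at) - \theta^{\pi} + \E_{\so_{t+1}\sim p}\left[V^\pi(\so_{t+1})\right] - V^\pi(\st),
\end{equation}
and then take the expectation of both sides over $\st\sim d_{\pi'}$ and $\at\sim \pi'$. The second move exploits that $d_{\pi'}$ is stationary for the $\pi'$-induced chain, so pushing one step forward leaves the state distribution unchanged:
\begin{equation}
\E_{\substack{\st\sim d_{\pi'} \\ \at\sim \pi'}}\left[\E_{\so_{t+1}\sim p}\left[V^\pi(\so_{t+1})\right]\right] = \E_{\st\sim d_{\pi'}}\left[V^\pi(\st)\right].
\end{equation}
These two value-function contributions cancel, leaving the compact identity $\E_{d_{\pi'},\pi'}[A^\pi] = \E_{d_{\pi'},\pi'}[r] - \theta^{\pi}$.

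The final move is to identify the remaining expected reward with $\theta^{\pi'}$. Invoking Assumption~\ref{assump:mc}, the limiting time-average in~\eqref{eq:theta-defn} coincides with the static expectation under the stationary distribution, so $\theta^{\pi'} = \E_{d_{\pi'},\pi'}[\,r(\st,\at) - \tfrac{1}{\beta}\log(\pi'/\pi_0)\,]$. Solving this for $\E_{d_{\pi'},\pi'}[r]$ and substituting into the previous display, then collecting the log-ratio term and rearranging, produces exactly~\eqref{eq:theta-diff}.

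I expect the main obstacle to be the stationarity exchange rather than the algebra: one must justify replacing the limiting Cesàro average in~\eqref{eq:theta-defn} by a single expectation over $d_{\pi'}$, which rests on the communicating assumption (Assumption~\ref{assump:mc}) guaranteeing a unique stationary distribution and valid ergodic averaging, and on the boundedness of $r$ together with $\pi,\pi'\ll\pi_0$ so that every expectation (including the finite KL term) is well-defined. A minor care point is ensuring the interchange of the limit with the expectation is legitimate, which again follows from the bounded-reward and ergodicity hypotheses.
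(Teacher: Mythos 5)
Your proposal is correct and follows essentially the same route as the paper's proof: expand $A^\pi$ via the Bellman backup of Proposition~\ref{lem:erar-backup}, cancel the value-function terms using invariance of $d_{\pi'}$ under the $\pi'$-induced chain, and identify the residual reward-plus-KL expectation with $\theta^{\pi'}$ through its stationary-distribution representation. Your closing remark spelling out why the Ces\`aro limit in the rate definition may be replaced by the static expectation under $d_{\pi'}$ (via Assumption~\ref{assump:mc} and bounded rewards) makes explicit a step the paper's proof uses implicitly, but it does not change the argument.
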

\end{tcolorbox}
\vspace{1em}

As a consequence of this result, we find that with the proper choice of the updated policy $\pi'$, the right-hand side of Equation~\eqref{eq:theta-diff} is guaranteed to be positive, implying that soft PI holds.
Using the Boltzmann form of a policy \citep{SAC1} with the differential $Q$-values as the energy function and the appropriate prior distribution ($\pi_0$), gives the desired result:
\vspace{1em}
\begin{tcolorbox}[colback=mycolor!5!white,colframe=mycolor!75!black]
\begin{theorem}[ERAR Policy Improvement]
\label{thm:theta-relation}Let a policy $\pi$ absolutely continuous w.r.t. $\pi_0$ and its corresponding differential value $Q^\pi(\st,\at)$ be given. Then, the policy 
\begin{equation}
    \pi'(\at|\st) \doteq \frac{\pi_0(\at|\st) e^{\beta Q^{\pi}(\st,\at)}}{\int  e^{\beta Q^{\pi}(\st,\at)} d\pi_0(\at|\st)}
    \label{eq:pi}
\end{equation}
achieves a greater entropy-regularized reward-rate. That is, $\theta^{\pi'} \geq \theta^{\pi}$, 
with equality only at convergence, when $\pi'=\pi=\pi^*$.
\label{thm:pi}
\end{theorem}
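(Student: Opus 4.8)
The plan is to apply the ERAR Rate Gap (Lemma~\ref{lem:theta-relation}) to the specific candidate $\pi'$ defined in Equation~\eqref{eq:pi} and show that the integrand is pointwise non-negative. First I would take logarithms in Equation~\eqref{eq:pi} to get, writing $Z(\st)\doteq\int e^{\beta Q^{\pi}(\st,\ao)}\,d\pi_0(\ao|\st)$ for the normalizer,
\begin{equation}
\frac{1}{\beta}\log\frac{\pi'(\at|\st)}{\pi_0(\at|\st)} = Q^{\pi}(\st,\at) - \frac{1}{\beta}\log Z(\st).
\end{equation}
Substituting this, together with $A^\pi(\st,\at)=Q^\pi(\st,\at)-V^\pi(\st)$, into the bracketed term of Equation~\eqref{eq:theta-diff}, the two copies of $Q^{\pi}(\st,\at)$ cancel and the residual no longer depends on the action:
\begin{equation}
A^\pi(\st,\at) - \frac{1}{\beta}\log\frac{\pi'(\at|\st)}{\pi_0(\at|\st)} = \frac{1}{\beta}\log Z(\st) - V^\pi(\st).
\end{equation}
Since this is constant in $\at$, the expectation over $\at\sim\pi'$ is trivial and the rate gap collapses to $\theta^{\pi'}-\theta^{\pi}=\E_{\st\sim d_{\pi'}}\!\left[\tfrac{1}{\beta}\log Z(\st)-V^\pi(\st)\right]$.

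The second step is to recognize the state-wise residual as a relative entropy. Using the previous display to express $Q^\pi$ in terms of $\log(\pi'/\pi_0)$ and inserting it into the definition of $V^\pi$ in Equation~\eqref{eq:V-defn}, I would show
\begin{equation}
\frac{1}{\beta}\log Z(\st) - V^\pi(\st) = \frac{1}{\beta}\,\mathrm{KL}\!\left(\pi(\cdot|\st)\,\|\,\pi'(\cdot|\st)\right) \geq 0,
\end{equation}
where the inequality is Gibbs' inequality. (Equivalently, $\tfrac{1}{\beta}\log Z(\st)$ is the Gibbs variational value $\sup_q\big[\E_q Q^\pi(\st,\cdot)-\tfrac{1}{\beta}\mathrm{KL}(q\|\pi_0)\big]$, attained uniquely at $q=\pi'$, hence at least the value $V^\pi(\st)$ obtained by the suboptimal choice $q=\pi$.) This yields the clean identity $\theta^{\pi'}-\theta^{\pi}=\tfrac{1}{\beta}\E_{\st\sim d_{\pi'}}\!\left[\mathrm{KL}(\pi(\cdot|\st)\,\|\,\pi'(\cdot|\st))\right]\geq 0$, establishing soft policy improvement.

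For the equality case I would argue that the expectation vanishes only when $\mathrm{KL}(\pi(\cdot|\st)\,\|\,\pi'(\cdot|\st))=0$ for $d_{\pi'}$-almost every $\st$, i.e. $\pi(\cdot|\st)=\pi'(\cdot|\st)$ on the support of $d_{\pi'}$. Under Assumption~\ref{assump:mc} the chain induced by $\pi'$ is communicating, so $d_{\pi'}$ has full support and equality forces $\pi=\pi'$ on all of $\s$. A policy satisfying $\pi=\pi'$ is a fixed point of the Boltzmann map~\eqref{eq:pi}, which is exactly the self-consistency condition characterizing the optimal $\pi^*$.

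The cancellation in the first step is mechanical, and the KL identification is short linear algebra, so the real content lies in the equality analysis. The most delicate point I expect is the support argument: I must verify carefully that communicating-ness genuinely delivers a full-support stationary distribution $d_{\pi'}$, and that the fixed-point condition $\pi=\pi'$ coincides with \emph{optimality} rather than merely a stationary point of the update — this last identification may require invoking uniqueness of the ERAR-optimal policy or the convergence of the induced soft policy-iteration scheme.
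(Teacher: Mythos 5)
Your proof is correct, and through the cancellation step it coincides with the paper's: both take logarithms of Equation~\eqref{eq:pi} and substitute into Lemma~\ref{lem:theta-relation} so that the two copies of $Q^{\pi}(\st,\at)$ cancel, leaving the action-independent residual $\beta^{-1}\log Z(\st)-V^\pi(\st)$. Where you diverge is in certifying nonnegativity: the paper invokes the variational (Donsker--Varadhan/Gibbs) formula for $\beta^{-1}\log\E_{\ao\sim\pi_0}e^{\beta Q^\pi(\st,\ao)}$ and lower-bounds the supremum by the suboptimal choice $q=\pi$, whereas you identify the residual \emph{exactly} as $\beta^{-1}\,\mathrm{KL}\bigl(\pi(\cdot|\st)\,\|\,\pi'(\cdot|\st)\bigr)$ --- your algebra checks out --- yielding the exact gap identity
\begin{equation*}
\theta^{\pi'}-\theta^{\pi}=\frac{1}{\beta}\,\E_{\st\sim d_{\pi'}}\mathrm{KL}\bigl(\pi(\cdot|\st)\,\|\,\pi'(\cdot|\st)\bigr)\geq 0.
\end{equation*}
These are two faces of the same fact (the variational formula is attained uniquely at the Gibbs density, and its slack at $q=\pi$ is precisely that KL term), but your version buys something concrete: it upgrades the paper's inequality to an identity, from which the ``equality only at convergence'' clause follows cleanly --- the gap vanishes iff $\pi=\pi'$ on the support of $d_{\pi'}$, which under Assumption~\ref{assump:mc} (where the induced chain is communicating, so $d_{\pi'}$ has full support in the finite/ergodic setting) forces $\pi=\pi'$ everywhere. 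The paper's appendix proof, by contrast, establishes only $\theta^{\pi'}\geq\theta^{\pi}$ and never actually argues the equality case it asserts in the theorem statement. Your closing caution is also well placed on a point the paper leaves equally unproven: concluding that the fixed point $\pi=\pi'$ of the Boltzmann map is the \emph{optimal} policy $\pi^*$ requires uniqueness of the solution to the ERAR optimality equation (or convergence of the induced soft policy iteration), which neither your argument nor the paper's supplies; flagging that gap rather than asserting the identification was the right call.
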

\end{tcolorbox}
\vspace{1em}
Upon convergence, Equation~\eqref{eq:theta-diff} is identically zero, with the optimal policy satisfying ${\pi^* \propto \exp \beta A^*(\st,\at)}$ as expected from the analogous discounted result.
We note that the corresponding result in Lemma~2 of \cite{SAC1} for SAC (which uses a uniform prior policy), involves the {\em total} value function. On the other hand, under the average-reward objective, the improved policy is calculated with the {\em differential} value function.
Intuitively, this result can be understood as the $\gamma \to 1$ limit of PI for SAC. Numerically, this can be seen as setting $\gamma=1$ and continuously subtracting the ``extensive'' contribution to the total value function throughout. This bulk contribution scales with the number of timesteps in an episode and is the result of accruing a per-timestep reward $\theta^{\pi}$. Since the same term accrues in the state- and action-value functions, it cancels in the numerator and denominator of Equation~\eqref{eq:pi}. In the case of SAC, the bulk contribution (essentially $N\theta^\pi$, for $N\gg 1$) is included in the value function and so a discount factor $\gamma<1$ is required to ensure that the total value function is bounded in the limit of large $N$ (in the sense of Equation~\eqref{eq:theta-defn}). In contrast, for the case of ASAC, the bulk contribution is automatically excluded from the corresponding evaluation (by definition), and the differential value function remains bounded in the limit of large $N$, obviating the need to introduce a discount factor. This intuition can be formalized through a Laurent series expansion; cf.~\cite{mahadevan1996average}.

To complete the discussion of convergence for ASAC, the policy evaluation (PE) step must also converge. To formulate this, we rely on the work of~\cite{wan2021learning} who give convergence proofs for average-reward policy evaluation.
\begin{lemma}[ERAR Policy Evaluation]
Consider a fixed policy $\pi$, for which $\theta^\pi$ of Equation~\eqref{eq:reward-rate-defn} has been calculated (e.g. with direct rollouts). The iteration of Equations \eqref{eq:differential-value} and \eqref{eq:V-defn} converges to the entropy-regularized differential value of $\pi$: $Q^\pi(\st,\at)$.
\end{lemma}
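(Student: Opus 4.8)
The plan is to reduce the entropy-regularized policy-evaluation iteration to an ordinary average-reward policy-evaluation problem and then invoke the convergence guarantees of \cite{wan2021learning}. First I would fold the entropy penalty into the reward. Combining the backup in Equation~\eqref{eq:q-pe} with the state-value definition in Equation~\eqref{eq:V-defn} shows that one step of the iteration applies the affine operator
\begin{equation}
(\mathcal{T}^\pi Q)(\so,\ao) = \tilde r(\so,\ao) - \theta^\pi + \E_{\so'\sim p(\cdot|\so,\ao)}\,\E_{\ao'\sim\pi(\cdot|\so')}\, Q(\so',\ao'),
\end{equation}
where $\tilde r(\so,\ao) = r(\so,\ao) - \frac{1}{\beta}\,\E_{\so'\sim p(\cdot|\so,\ao)}\,\E_{\ao'\sim\pi(\cdot|\so')}\log\frac{\pi(\ao'|\so')}{\pi_0(\ao'|\so')}$ is an effective reward. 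Since $\pi_0$ has full support and $\pi$ is absolutely continuous w.r.t. $\pi_0$, the conditional KL term is finite, and together with the bounded-reward assumption this makes $\tilde r$ bounded. Writing $P_\pi$ for the state-action transition operator $(P_\pi Q)(\so,\ao)=\E_{\so'\sim p}\E_{\ao'\sim\pi}Q(\so',\ao')$, the iteration is exactly $Q_{k+1} = \tilde r - \theta^\pi \mathbf{1} + P_\pi Q_k$, i.e. ordinary differential policy evaluation for a Markov reward process with reward $\tilde r$ and known rate $\theta^\pi$.

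Next I would identify the fixed point. Proposition~\ref{lem:erar-backup} shows that $Q^\pi$ defined by Equation~\eqref{eq:theta-differential-value} satisfies $Q^\pi = \tilde r - \theta^\pi\mathbf{1} + P_\pi Q^\pi$, so it is a fixed point of $\mathcal{T}^\pi$. Subtracting this fixed-point equation from the iteration cancels the affine part and yields the clean error recursion $Q_{k+1}-Q^\pi = P_\pi(Q_k - Q^\pi)$, hence $Q_k - Q^\pi = P_\pi^{\,k}(Q_0 - Q^\pi)$. Convergence therefore hinges entirely on the asymptotics of the powers $P_\pi^{\,k}$.

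Finally, I would control $P_\pi^{\,k}$ using the chain's ergodicity. Under Assumption~\ref{assump:mc} the induced state chain is communicating, and because $\pi$ inherits full support from $\pi_0$ the lifted state-action chain $P_\pi$ is likewise communicating with a unique stationary distribution $d_\pi$. Assuming aperiodicity (or after a standard aperiodicity transform that adds a self-transition without changing the differential value), $P_\pi^{\,k}\to \mathbf{1}\,d_\pi^{\top}$ geometrically, so $Q_k - Q^\pi \to \mathbf{1}\,\big(d_\pi^{\top}(Q_0-Q^\pi)\big)$, i.e. the iterates converge to $Q^\pi$ up to an additive constant. This is the resolution to the main obstacle: the average-reward backup is only a non-expansion in the sup norm and never a strict contraction (since $P_\pi\mathbf{1}=\mathbf{1}$), so I cannot apply Banach's fixed-point theorem directly; instead convergence must be established in the span seminorm, where mixing of $P_\pi$ supplies the needed geometric contraction. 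The residual additive constant is exactly the offset ambiguity of the differential value function; it is removed by the normalization built into Equation~\eqref{eq:theta-differential-value} (equivalently, by the reference-state or mean-subtraction step used in the evaluation scheme of \cite{wan2021learning}), which pins down $Q^\pi$ and yields exact convergence. I expect verifying aperiodicity/ergodicity of the lifted chain, and justifying that the chosen normalization makes the surviving offset $d_\pi^{\top}(Q_0-Q^\pi)$ vanish, to be the most delicate points.
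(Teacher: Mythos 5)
Your proposal takes essentially the same route as the paper's proof: since $\pi$ is fixed and $\pi \ll \pi_0$, the entropic cost is absorbed into an effective (bounded) reward, reducing the problem to ordinary un-regularized average-reward policy evaluation, whose convergence is then supplied by \cite{wan2021learning}. The extra machinery you develop — the error recursion $Q_{k+1}-Q^\pi = P_\pi(Q_k-Q^\pi)$, span-seminorm contraction under ergodicity/aperiodicity, and the additive-offset ambiguity pinned down by normalization — is precisely the content the paper delegates to that citation, and your handling of it (including flagging the non-contraction in sup norm and the surviving constant) is sound.
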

\begin{proof}
The proof follows from the convergence results established in the un-regularized case, e.g. \cite{wan2021learning}. Since the policy $\pi$ is fixed (and $\pi \ll \pi_0$), the entropic cost $-\beta^{-1} \textrm{KL}\left({\pi||\pi_0}\right)$ is finite and can be absorbed into the reward function's definition: $r \xleftarrow[]{} r-\beta^{-1} \textrm{KL}\left({\pi||\pi_0}\right)$, and the standard proof techniques apply. 
\end{proof}
\subsection{Implementation}
\begin{figure*}[t]
\begin{minipage}[t]{1.0\textwidth}
  \includegraphics[width=\linewidth]{avg_reward.png}
  \caption{Training curves on continuous control benchmarks. We compare our algorithm, average-reward soft actor-critic (ASAC), with the following baselines: average-reward off-policy deep deterministic policy gradient (ARO-DDPG), average-reward trust-region policy optimization (ATRPO), and average-reward policy optimization (APO). ASAC learns the fastest with the best asymptotic performance. Each curve corresponds to an average over 20 random seeds, with standard errors indicated by the shaded region.}
  \label{fig:mujoco}
\end{minipage}
\end{figure*}
As in SAC \citep{SAC1}, we propose to interleave steps of policy evaluation (PE) and policy improvement (PI) using stochastic approximation to train the critic and actor networks, respectively. We use a deep neural net with parameters $\psi$, and denote $Q_\psi$ as the ``online'' critic network (with trainable parameters), and denote $Q_{\bar{\psi}}$ as the ``target'' critic, updated periodically through Polyak averaging of the parameters.
To implement a PI step, we use the KL divergence loss to update the parameters $\phi$ of an actor network $\pi_\phi$ based on the policy improvement theorem (Equation~\eqref{eq:pi}):
\begin{equation}
    \mathcal{L}_\phi = \sum_{\st\in \mathcal{B}} \textrm{KL}\left(\pi_\phi(\cdot|\st) \biggr|\biggr| \frac{\pi_0(\cdot|\st)e^{\beta Q_\psi(\st,\cdot)}}{Z(\st)}\right)\;.
\end{equation}
Similar to SAC, the independence of parameters on the partition function $Z$ allows us to simplify this loss expression to the more tractable form:
\begin{equation}
    \mathcal{L}_\phi = \sum_{\st\in \mathcal{B}} \E_{\at\sim{} \pi_\phi} \left( \log \frac{\pi_\phi(\at|\st)} {\pi_0(\at|\st)} - \beta^{-1} Q_\psi(\st,\at) \right)\;.
\end{equation}
In practice, we also use the re-parameterization trick to efficiently propagate gradients through the actor model.
After updating the actor via soft policy improvement, we update the critic (differential value) by performing a policy evaluation step with actions sampled from the current actor network. The mean squared error loss is calculated by comparing the expected $Q$-value to the right-hand side of Equation~\eqref{eq:q-pe}:
\begin{equation}
    \mathcal{L}_\psi = \sum_{(\st,\at,r,\so_{t+1}) \sim{} \mathcal{B}}\biggr|Q_\psi(\st,\at) - \hat{y}(r, \theta; \bar{\psi}, \phi) \biggr|^2,
\end{equation}
where $\hat{y}$ is the target value, defined as:
\begin{equation*}
    \hat{y}(r, \theta; \bar{\psi}, \phi)=r - \theta +  \E_{\ao_{t+1}\sim{} \pi_\phi(\cdot|\so_{t+1})} \left[Q_{\bar{\psi}}(\so_{t+1},\ao_{t+1}) - \frac{1}{\beta} \log \frac{\pi_\phi(\ao_{t+1}|\so_{t+1})}{\pi_0(\ao_{t+1}|\so_{t+1})} \right].
\end{equation*}

To update the ERAR rate $\theta^\pi$, we define its target as the batch-wise mean of its definition in Equation~\eqref{eq:theta-defn}. We treat $\theta$ as a trainable parameter (using an Adam optimizer) and train it to minimize the residual error compared to this target value.

We adopt the double $Q$-learning paradigm ~\citep{fujimoto2018addressing, SAC1, ARO-DDPG} used in previous literature for reducing estimation bias: two critics are maintained, and the minimum $Q$-value is used at each state-action pair. Although the corresponding theory~\citep{fujimoto2018addressing} for the average-reward case has not been studied in detail, we found this to improve experimental performance. 
Understanding the effect of estimation bias is an interesting line of study for future work.

Unique to the average-reward objective is the \textit{family} of solutions to the Bellman equation. Rather than a unique solution, the average-reward Bellman equation gives the differential value function an additional degree of freedom: If $Q(\so,\ao)$ satisfies Eq.~\eqref{eq:theta-differential-value} then $Q(\so,\ao)+c$ is also a solution for all $c \in \mathbb{R}$. Section 4.1 of~\citep{APO} provides an interesting discussion on the learning of value functions with an additive bias and a related downstream ``value drifting problem'', which they correct with value-based regularization. Section 6 of~\citep{wan2021learning} provides a discussion on learning centered value functions  via an additionally learned corrective ``value function'' $F$. 
To correct for this additional degree of freedom in an off-policy way, we introduce a baseline for centering the value function. Since an entire family of value functions can solve the Bellman equation, to pin the value, we choose the solution which passes through the origin, by always subtracting the value $Q(s=0, a=0)$. This choice is arbitrary, but works well in practice. 
Compared to the proposed regularization, it does not require any additional hyperparameters. Since it is not centering the value function in the traditional sense, it does not require on-policy data, but in principle the constant shift can be recovered upon convergence via rollouts of the optimal policy.

Finally, in average-reward tasks with terminating states, previous work~\citep{ARTRPO} has introduced a ``reset cost'', giving a penalty to the agent for resetting the environment and treating the reset state $s\sim{}\mu(\cdot)$ as the next state to emulate a continuing task. Prior work has chosen a fixed reset cost ($-100$) which was deemed suitable in the environments tested. However, it is not reasonable to expect such penalties to be effective for tasks with different reward scales or dynamics (cf. Humanoid results in Appendix D of~\citep{ARTRPO}). As such, we introduce a novel adaptive reset cost: To ensure the penalty for resetting is commensurate with the accrued rewards, we simply take the mean of all rewards in the current batch that do not correspond to termination. We use a rolling average (with the same learning rate as used for $\theta$) to slowly adapt the penalty to the agent's policy. We note that learning (and even defining) an ``optimal'' reset cost is an open question, which calls for further study. More details on the implementation, as well as the pseudocode for ASAC can be found in the Appendix, Section~\ref{app:implementation}.

\section{Experiments}
To evaluate our new algorithm, we test ASAC on a set of locomotion environments of increasing complexity including HalfCheetah, Ant, Swimmer, Hopper, Walker2d, and Humanoid (all version $5$) from the Gymnasium Mujoco suite~\citep{todorov2012mujoco,
towers2024gymnasium}. We compare the performance (average evaluation return across 10 episodes) against the existing average-reward algorithms discussed in Section~\ref{sec:prior-work}: APO, ATRPO, and ARO-DDPG. For these baselines, we use the hyperparameters provided in the corresponding papers. While the focus of this paper is on a comparison of algorithms for the average-reward criterion, we also provide a comparison to the discounted algorithm SAC in the Appendix.
To alleviate the cost of hyperparameter tuning, we simply use the default values inherited from SAC. Further details on the implementation and hyperparameter selection can be found in {Section~\ref{app:hparams}}.
ASAC performs well compared to both off-policy (ARO-DDPG) and on-policy algorithms (ATRPO, APO).
To maximize performance of the ARO-DDPG baseline, we found it beneficial to use a replay buffer of maximum length (equal to number of environment interactions). 
Compared to ASAC, the baselines fail to solve the task in a meaningful way on some environments (Walker, Ant, Humanoid), highlighting the importance of maximum-entropy approaches for high-dimensional locomotion tasks, especially in the average-reward setting.
The results of these experiments are shown in Figure~\ref{fig:mujoco}. Our experiments show that ASAC represents a novel and effective algorithm for the average-reward setting.

\section{Discussion\label{sec:discussion}}
The motivation for developing novel algorithms for average-reward RL arises from the problems generally associated with discounting. When the RL problem is posed in the discounted framework, a discount factor $\gamma\in[0,1)$ is a required input parameter. However, there is often no principled approach for choosing the value of $\gamma$ corresponding to the specific problem being addressed. Thus, the experimenter must treat $\gamma$ as a hyperparameter. This reduces the choice of $\gamma$ to a trade-off between large values to capture long-term rewards and small values to capture computational efficiency which typically scales polynomially with the horizon, $H=(1-\gamma)^{-1}$~\citep{kakade2003sample}.

It is important to note that the horizon $H$ introduces a natural timescale to the problem, but this timescale may not be well-aligned with another timescale corresponding to the optimal policy: the mixing time of the induced Markov chain. For the discounted solution to accurately approximate the average-reward optimal policy, the discounting timescale (horizon) must be larger than the mixing time. Unfortunately, the estimation of the mixing time for the optimal dynamics can be challenging to obtain in the general case, even when the transition dynamics are known, making a principled use of discounting computationally expensive. 

Therefore, an arbitrary ``sufficiently large'' choice of $\gamma$ is often made (sometimes dynamically~\citep{wei2021learning, koprulu2024dense}) without knowledge of the relevant problem-dependent timescale. This can be problematic from a computational standpoint as evidenced by recent work \citep{jiang2015dependence, ppo, andrychowicz2020matters}. These points are illustrated in Figure~\ref{fig:swimmer} which showed the performance of SAC for the Swimmer environment with different choices of $\gamma$. For the widely used choice $\gamma=0.99$ the evaluation rewards are low relative to the optimal case, whereas the average rewards algorithms perform well (Fig.~\ref{fig:mujoco}), highlighting the benefits of using the average-reward criterion. After submission of this paper, we became aware of related work:~RVI-SAC~\cite{hisaki2024rvi}, which uses relative value iteration (RVI) to estimate the reward-rate. A detailed comparison of RVI-SAC and ASAC is left to future work.

In this work, we have developed a framework for combining the benefits of the average-reward approach with entropy regularization. In particular, we have focused on extensions of the discounted algorithm SAC to the average-reward domain.
By leveraging the connection of the ERAR objective to the soft discounted framework, we have presented the first solution to ERAR MDPs in continuous state and action spaces by use of function approximation. Our experiments suggest that ASAC compares favorably in several respects to their discounted counterparts: stability, convergence speed, and asymptotic performance. Our algorithm leverages existing codebases allowing for a straightforward and easily extendable implementation for solving the ERAR objective.

\section{Future Work}
The current work suggests multiple extensions for future exploration.
Beginning with the average-reward extension of SAC~\citep{SAC1}, further developments have been made~\citep{SAC2} including automated temperature adjustment, which we foresee as a straightforward extension for future work. 
As a value-based technique, other ideas from the literature such as $\text{TD}(n)$, REDQ \citep{redq}, DrQ \citep{kostrikov2020image}, combating estimation bias \citep{hussing2024dissecting}, or dueling architectures \citep{wang2016dueling} may be included. From the perspective of sampling, the calculation of $\theta$ can likely benefit from more complex replay sampling, e.g. PER \citep{schaul2015prioritized}. An important contribution for future work is studying the sample complexity and convergence properties of the proposed algorithm. We believe that the average-reward objective with entropy regularization is a fruitful direction for further research and real-world application, with this work addressing a gap in the existing literature.
\section*{Acknowledgements}
JA would like to acknowledge the use of the supercomputing facilities managed by the Research Computing Department at UMass Boston; the Unity high-performance computing cluster; and funding support from the Alliance Innovation Lab – Silicon Valley. VM would like to acknowledge the Charles W. Davidson College of Engineering for providing computing resources used in experiment evaluation. RVK and JA would like to acknowledge funding support from the NSF through Award No.  PHY-2425180. ST was supported in part by NSF Award No. 2513350, PAZY grant (195-2020), and WCoE, {Texas~Tech~U.} This work is supported by the National Science Foundation under Cooperative Agreement PHY-2019786 (The NSF AI Institute for Artificial Intelligence and Fundamental Interactions, http://iaifi.org/).

\bibliography{main}
\bibliographystyle{rlj}

\beginSupplementaryMaterials

\section{Proofs\label{app:proofs}}

\begin{customlemma}{\ref{lem:erar-backup}}[ERAR Backup Equation]
Let an ERAR MDP be given with reward function $r(\so,\ao)$, fixed evaluation policy $\pi$ and prior policy $\pi_0$. Then the differential value of $\pi$, $Q^\pi(\st,\at)$, satisfies 
    \begin{equation}
        Q^\pi(\st,\at) = r(\st,\at) - \theta^{\pi} + \mathbb{E}_{\so_{t+1} \sim{} p} V^\pi(\so_{t+1}),
        \label{eq:q-peA}
    \end{equation}
with the entropy-regularized definition\footnote{Equation~\eqref{eq:V-defnA} is an extension of $V^\pi_{\text{soft}}$ in \cite{HaarnojaSQL} to the case of a non-uniform prior policy.} of state-value function
\begin{equation}
    V^\pi(\st) = \mathbb{E}_{\at \sim{} \pi} \left[ Q^{\pi}(\st,\at) - \frac{1}{\beta} \log \frac{\pi(\at|\st)}{\pi_0(\at|\st)} \right]. \label{eq:V-defnA}
\end{equation}
\end{customlemma}
\begin{proof}
    
We begin with the definitions for the current state-action and for the next state-action value functions, respectively:
\begin{align*}
    Q^\pi(\st,\at)&=r(\st,\at) - \theta^\pi + \E_{p,\pi} \left[ \sum_{k=1}^\infty \left(  r(\so_{t+k},\ao_{t+k}) - \frac{1}{\beta} \log \frac{\pi(\ao_{t+k}|\so_{t+k})}{\pi_0(\ao_{t+k}|\so_{t+k})} - \theta^\pi\right) \right],\\
    Q^\pi(\so_{t+1},\ao_{t+1})&=r(\so_{t+1},\ao_{t+1}) - \theta^\pi + \E_{p,\pi} \left[ \sum_{k=2}^\infty \left( (\so_{t+k},\ao_{t+k}) - \frac{1}{\beta} \log \frac{\pi(\ao_{t+k}|\so_{t+k})}{\pi_0(\ao_{t+k}|\so_{t+k})} - \theta^\pi\right) \right].
\end{align*}

Re-writing $Q^\pi(\st,\at)$ by writing out the first term in the infinite sum and highlighting the terms of $Q^\pi(\so_{t+1},\ao_{t+1})$ in \JA{blue},
\begin{align*}
    Q^\pi(\st,\at)=r(\st,\at) - \theta^\pi +&\E_{p,\pi} \Biggr[  r(\so_{t+1},\ao_{t+1}) - \frac{1}{\beta} \log \frac{\pi(\ao_{t+1}|\so_{t+1})}{\pi_0(\ao_{t+1}|\so_{t+1})} - \theta^\pi + \\
    &\JA{\sum_{k=2}^\infty \left(  r(\so_{t+k},\ao_{t+k}) - \frac{1}{\beta} \log \frac{\pi(\ao_{t+k}|\so_{t+k})}{\pi_0(\ao_{t+k}|\so_{t+k})} - \theta^\pi\right)} \Biggr], \\
    Q^\pi(\st,\at)=r(\st,\at) - \theta^\pi + &\E_{\so_{t+1} \sim{} p,\ao_{t+1}\sim{}\pi} \left[ Q^\pi_\theta(\so_{t+1},\ao_{t+1}) - \frac{1}{\beta} \log \frac{\pi(\ao_{t+1}|\so_{t+1})}{\pi_0(\ao_{t+1}|\so_{t+1})} \right].
\end{align*}
Identifying the entropy-regularized state value function (as in the discounted setting) ${V(\st) = \E_{\at \sim{} \pi} \left[ Q^\pi(\st,\at) - \frac{1}{\beta} \log \frac{\pi(\at|\st)}{\pi_0(\at|\st)} \right]}$ completes the proof.
\end{proof}

\begin{customlemma}{\ref{lem:theta-relation}}[ERAR Rate Gap]
Consider two policies $\pi, \pi'$ absolutely continuous w.r.t. $\pi_0$. Then the gap between their corresponding entropy-regularized reward-rates is:
\begin{equation}
    \theta^{\pi'} - \theta^{\pi} = \E_{\substack{\st\sim d_{\pi'} \\ \at\sim \pi'}} \left( A^\pi(\st,\at) - \frac{1}{\beta} \log \frac{\pi'(\at|\st)}{\pi_0(\at|\st)}\right),
    \label{eq:theta-diffA}
\end{equation}
where $A^\pi(\st,\at)=Q^\pi(\st,\at) - V^\pi(\st)$ is the advantage function of policy $\pi$ and $d_{\pi'}$ is the steady-state distribution induced by $\pi'$.
\end{customlemma}
\begin{proof}
Working from the right-hand side of the equation, 
\begin{align*}
    &\E_{\st\sim{} d_{\pi'}, \at\sim{} \pi' } \left( A^\pi(\st,\at) - \frac{1}{\beta} \log \frac{\pi(\at|\st)}{\pi_0(\at|\st)}\right) = \E_{\st\sim{} d_{\pi'}, \at\sim{} \pi' } \left( Q^\pi(\st,\at) - V^\pi(\st) - \frac{1}{\beta} \log \frac{\pi'(\at|\st)}{\pi_0(\at|\st)}\right)\\
    &= \E_{\st\sim{} d_{\pi'}, \at\sim{} \pi' } \left( {\color{blue}{r(\st,\at)}} - \theta^\pi + \E_{\so_{t+1}\sim{}p} V^\pi(\so_{t+1}) - V^\pi(\st) {\color{blue}{- \frac{1}{\beta} \log \frac{\pi'(\at|\st)}{\pi_0(\at|\st)} }}\right) \\
    &= \theta^{\pi'} - \theta^\pi + \E_{\st\sim{} d_{\pi'}, \at\sim{} \pi'} \left( \E_{\so_{t+1}\sim{} p(\cdot|\st,\at)} V^\pi(\so_{t+1}) - V^\pi(\st) \right)\\
    &= \theta^{\pi'} - \theta^\pi.
\end{align*}
where we have used the definition
\begin{equation}
    \theta^{\pi'} =\E_{\st\sim{} d_{\pi'}, \at\sim{} \pi' } \left(  {r(\st,\at)}- \frac{1}{\beta} \log \frac{\pi'(\at|\st)}{\pi_0(\at|\st)}\right),
\end{equation}
and
\begin{equation}
    \E_{\st\sim{} d_{\pi'}} \E_{\at\sim{}\pi'} \E_{\so_{t+1}\sim{} p} V^\pi(\so_{t+1}) = \E_{\st\sim{} d_{\pi'}} V^\pi(\st),
\end{equation}
which follows given that $d_{\pi'}$ is the stationary distribution. In other words, $d_{\pi'}$ is an eigenvector of the transition operator $p(\so_{t+1}|\st,\at) \cdot \pi'(\ao_{t+1}|\so_{t+1})$.
\end{proof}

\begin{customtheorem}{\ref{thm:pi}}[ERAR Policy Improvement]
\label{thm:theta-relationA}Let a policy $\pi$ absolutely continuous w.r.t. $\pi_0$ and its corresponding differential value $Q^\pi(\st,\at)$ be given. Then, the policy 
\begin{equation}
    \pi'(\at|\st) \doteq \frac{\pi_0(\at|\st) e^{\beta Q^{\pi}(\st,\at)}}{\int  e^{\beta Q^{\pi}(\st,\at)} d\pi_0(\at|\st)}
    \label{eq:piA}
\end{equation}
achieves a greater entropy-regularized reward-rate. That is, $\theta^{\pi'} \geq \theta^{\pi}$, 
with equality only at convergence, when $\pi'=\pi=\pi^*$.
\end{customtheorem}

\begin{proof}
Let $\pi'$ be defined as above. Then 
\begin{equation}
    \frac{1}{\beta} \log \frac{\pi'(\at|\st)}{\pi_0(\at|\st)} = Q^{\pi}(\st,\at) - \frac{1}{\beta} \log \E_{a\sim{}\pi_0} e^{\beta Q^\pi(\st,\at)}.
\end{equation}
Using Lemma~\ref{lem:theta-relation},
\begin{align*}
    \theta^{\pi'} - \theta^\pi &= \E_{s\sim{} d_{\pi'}, a \sim{} {\pi'}} \left(A^\pi(\st,\at) -\frac{1}{\beta}\log\frac{\pi'(\at|\st)}{\pi_0(\at|\st)}\right)\\
    &= \E_{s\sim{} d_{\pi'}, a \sim{} {\pi'}} \left(Q^\pi(\st,\at)-V^\pi(s) -\frac{1}{\beta}\log\frac{\pi'(\at|\st)}{\pi_0(\at|\st)}\right)\\
     &= \E_{s\sim{} d_{\pi'}, a \sim{} {\pi'}} \left(\frac{1}{\beta} \log \E_{a\sim{}\pi_0} e^{\beta Q^\pi(\st,\at)}-V^\pi(s) \right)\geq 0\;,\\
\end{align*}
where the last line follows from the variational formula~\cite{mitter2000duality,theodorou2012relative},
\begin{equation}
    \frac{1}{\beta} \log \E_{a\sim{}\pi_0} e^{\beta Q^\pi(\st,\at)} = \sup_\pi \E_{a\sim{} \pi} \left( Q^{\pi}(\st,\at) - \frac{1}{\beta} \log \frac{\pi(\at|\st)}{\pi_0(\at|\st)}\right).
\end{equation}
\end{proof}

\section{Implementation Details \label{app:implementation}}

For all SAC runs, we used \cite{stable-baselines3} implementation of SAC with hyperparameters (beyond the default values) shown below in Section~\ref{app:hparams}. The finetuned runs here took $\sim{}3000$ GPU hours for all environments, ran on a variety of RTX series and A100 GPUs. Each run requires roughly $\sim{}1-10$ GB of RAM.
\subsection{Hyperparameters\label{app:hparams}}
In addition to the methods discussed in the main text, we also use gradient clipping (on critic network only), with the maximum gradient norm of $10$ for all experiments.

For all ASAC experiments, we use the same hyperparameters as~\cite{SAC1}: batch size of $256$, replay buffer size of $1\,000\,000$, hidden dimension of $256$ for each of $2$ hidden layers (actor and critic networks), Polyak averaging with coefficient $0.005$, train frequency and gradient steps of $1$ (train for one gradient step at each environment step). We use the Adam optimizer for actor, critic, and reward-rate with learning rates $10^{-4}, 5\times10^{-4}, 5 \times 10^{-3}$. We clip the critic network gradients with a maximum norm of $10$. The scale for reset penalties is chosen as $p_0=10$ (see pseudocode below). In all environments (for SAC and ASAC) we use $\beta=5$, except for Swimmer and Humanoid, for which we use $\beta=20$. Note that this is in line with the ``reward scale'' used in \citep{SAC1}. We found that hyperparameter sweeps can give better performance for individual environments, but the values listed above gave a strong performance universally.

We found the replay buffer size to be a sensitive hyperparameter for ARO-DDPG, in particular for maintaining its asymptotic performance. We chose the largest replay buffer for ARO-DDPG (equivalent to total environment interactions), but further tuning is left to future work as it is an expensive environment-dependent operation. We also note that beyond the default hyperparameters for ASAC described above, we did not perform any tuning, showcasing ASAC's robustness to hyperparameter choice. Future work may entail an extensive hyperparameter sweep and sensitivity analysis to further understand the robustness and maximize performance across various environments.
\begin{figure}
    \centering
    \includegraphics[width=1.0\textwidth]{sac_avg_reward.png}
    \caption{Comparison to SAC shows that our average-reward extension outperforms the original discounted SAC on the environments tested. It is worth recalling that SAC and ASAC are inherently designed to optimize different objectives (a discounted return and average reward, respectively), despite the prevalent use of SAC as a surrogate for optimizing the average reward. Nevertheless, we give a comparison between the two algorithms here for completeness. We note that the reward values are different than in earlier environment versions (as used in e.g.~\cite{SAC1}), as the result of an updated reward function and bug fixes (including changes to contact forces, control costs), described in detail here: \url{https://farama.org/Gymnasium-MuJoCo-v5_Environments}. }
    \label{fig:my_label}
\end{figure}

\begin{algorithm}[H]
\caption{Average-Reward Soft Actor-Critic (ASAC)}
\begin{algorithmic}[1]
\State Initialize policy parameters $\phi$, $Q$-function parameters $\psi_1, \psi_2$.
\State Initialize target parameters $\bar{\psi}_1 \gets \psi_1$, $\bar{\psi}_2 \gets \psi_2$.
\State Initialize learning rates and optimizers (Adam).
\State Initialize mini-batch size $b$, Polyak step-size $\tau$, temperature $\alpha$ (fixed), replay buffer $\mathcal{D}$.
\While{not converged}
    \State Observe state $s_t$ and sample action $a_t \sim \pi_\phi(\cdot \mid s_t)$
    \State Execute $a_t$, observe reward $r_t$ and next state $s_{t+1}$
    \State Store $(s_t, a_t, r_t, s_{t+1})$ in replay buffer $\mathcal{D}$
    \For{each gradient step}
        \State Sample mini-batch of $b$ tuples $(s_i, a_i, r_i, s'_i) \sim \mathcal{D}$
        \State Sample $a'_i \sim \pi_\phi(\cdot \mid s'_i)$
        \State Shift the target $Q$-functions to pass through the origin:
        \[
        Q_{\bar{\psi}_j}(s, a) \gets Q_{\bar{\psi}_j}(s, a) - Q_{\bar{\psi}_j}(0, 0)
        \]
        \State Use pessimistic estimate by taking the pointwise minimum:
        \[
        Q_{\bar{\psi}}(s, a) \gets \min_{j=1,2} Q_{\bar{\psi}_j}(s, a) \quad \forall (s, a)
        \]
        \State If episode terminated at $s'$, apply adaptive penalty: 
        \[
        r \gets r - p \quad \text{where} \quad p\gets (1-\tau)p + \tau \bar{p} \text{  and  } \bar{p} = p_0 \cdot \frac{1}{b} \sum_{r \sim \mathcal{B}} r
        \]
        \State Compute target:
        \[
        \hat{y}(r, \theta; \bar{\psi}, \phi) = r - \theta + \mathbb{E}_{a' \sim \pi_\phi(\cdot \mid s')} \left[ Q_{\bar{\psi}}(s', a') -  \beta^{-1} \log \frac{\pi_\phi(a' \mid s')}{\pi_0(a' \mid s')} \right]
        \]
        \State Update $Q$-functions (for $\psi \in \{\psi_1, \psi_2\}$) by minimizing:
        \[
        \mathcal{L}_\psi = \frac{1}{b} \sum_{(s, a, r, s') \sim \mathcal{B}} \left(Q_\psi(s, a) - \hat{y}(r, \theta; \bar{\psi}, \phi) \right)^2
        \]
        \State Update policy by minimizing (using pessimistic estimate of online networks $Q_\psi$):
        \[
        \mathcal{L}_\phi = \frac{1}{b} \sum_{s \in \mathcal{B}} \mathbb{E}_{a \sim \pi_\phi} \left[ \log \frac{\pi_\phi(a \mid s)}{\pi_0(a \mid s)} - \beta^{-1} Q_\psi(s, a) \right]
        \]
        \State Update ERAR rate $\theta$ by minimizing $\mathcal{L}_\theta = (\theta - \bar{\theta})^2$ where:
        \[
        \bar{\theta} = \frac{1}{b} \sum_{(s, a, r) \sim \mathcal{B}} \left[ r -  \beta^{-1} \log \frac{\pi_\phi(a \mid s)}{\pi_0(a \mid s)} \right]
        \]
        \State Update target networks:
        \[
        \bar{\psi}_j \gets \tau \psi_j + (1 - \tau) \bar{\psi}_j \quad \text{for } j = 1, 2
        \]
    \EndFor
\EndWhile
\end{algorithmic}
\end{algorithm}

\end{document}